\PassOptionsToPackage{table}{xcolor}
\documentclass[11pt]{article}

\usepackage[preprint]{acl}

\usepackage{times}
\usepackage{latexsym}
\usepackage[T1]{fontenc}
\usepackage[utf8]{inputenc}
\usepackage{microtype}
\usepackage{graphicx}
\usepackage{amsmath}
\usepackage{amssymb}
\usepackage{amsthm}

\newtheorem{proposition}{Proposition}
\newtheorem{corollary}{Corollary}
\usepackage{booktabs}
\usepackage{multirow}
\usepackage{xspace}
\usepackage{subcaption}
\usepackage{bbm}
\usepackage{xcolor}

\newcommand{\acoer}{\textsc{Acoer}\xspace}
\newcommand{\thinktag}{\texttt{<think>}}
\newcommand{\thinktagend}{\texttt{</think>}}

\title{Beyond Penalizing Mistakes: Stabilizing Efficiency Training in Large Reasoning Models via Adaptive Correct-Only Rewards}

\author{
  \bfseries Jungseob Lee$^{1}$\thanks{Equal contribution.} \quad
  Seungyoon Lee$^{1}$\footnotemark[1] \quad
  Seongtae Hong$^{1}$ \\
  \bfseries Minhyuk Kim$^{1}$ \quad
  Chanjun Park$^{2}$\thanks{Corresponding authors.} \quad
  Heuiseok Lim$^{1}$\footnotemark[2] \\[4pt]
  $^{1}$Korea University \quad $^{2}$Soongsil University \\[3pt]
  \normalsize\ttfamily \{omanma1928, dltmddbs100, ghdchlwls123, mhkim0929, limhseok\}@korea.ac.kr \\
  \normalsize\ttfamily chanjun.park@ssu.ac.kr
}

\begin{document}
\maketitle

\begin{abstract}
Training large language models to reason efficiently is a critical challenge. While integrating length-penalizing rewards into Group Relative Policy Optimization (GRPO) aims to reduce verbosity, it frequently triggers reward collapse, severely degrading reasoning capabilities. Through a systematic evaluation of various reward configurations, we identify the root mechanism: GRPO's group normalization creates divergent advantages when incorrect answers receive continuous length penalties. Consequently, methods penalizing the length of incorrect answers are structurally prone to collapse under sustained optimization. Furthermore, restricting penalties exclusively to correct answers avoids this primary failure, but leaves the model susceptible to a stochastic collapse driven by response over-compression. To robustly prevent both failure modes, we propose \acoer (Adaptive Correct-Only Efficiency Reward). \acoer eliminates the structural penalty loop by isolating brevity bonuses to correct completions and prevents stochastic compression via dynamic budget normalization and control-loop penalty adjustments. Evaluated across diverse mathematical reasoning benchmarks, \acoer improves overall accuracy compared to the base model while reducing token generation by over 60\%, establishing a fundamentally stable approach for efficiency-aware optimization.
\end{abstract}

\section{Introduction}
\label{sec:intro}

Reasoning models such as DeepSeek-R1 \citep{guo2025deepseek} and Qwen3 \citep{yang2025qwen3} produce extended reasoning traces via reinforcement learning (RL), achieving strong performance on mathematical reasoning benchmarks. However, this capability incurs substantial computational cost, as these models frequently generate thousands of reasoning tokens per problem, even when the underlying query is solvable in far fewer tokens. Recent work has therefore begun to treat overthinking itself as an efficiency target, including survey work, reasoning-trace pruning, budget-control mechanisms, and adaptive decisions about when to think longer \citep{stop-overthinking, o1pruner, thinkprune, budgetthinker, thinkswitcher}.

A growing body of work addresses this efficiency problem by adding length-penalizing signals to Group Relative Policy Optimization (GRPO) \citep{shao2024deepseekmath} rewards \citep{lcpo, short-rl, fast-on-easy, laser, recut, alp, grpo-lead}. A critical, yet often under-explored, design choice in these methods is how to handle incorrect rollouts. When reward functions apply length penalties even to rollouts that ultimately yield the wrong answer, practitioners often encounter \emph{reward collapse}: after initial improvements, the model degenerates into producing outputs too short to reason effectively. Although widely recognized as a major hurdle, the exact mechanisms driving this collapse remain poorly understood. 

To investigate the root cause of this instability, we conduct a controlled comparison of diverse reward configurations trained under identical conditions. By systematically isolating reward components, we find that penalizing the length of incorrect rollouts is a primary reason. Specifically, in our evaluation, methods that apply continuous length penalties to incorrect rollouts consistently exhibited collapse, whereas methods that apply penalties exclusively to correct rollouts or use discrete binary thresholds remained largely stable. A sensitivity analysis further confirms this relationship, demonstrating that even a microscopically small penalty weight on wrong answers can trigger collapse.

Crucially, analyzing the optimization trajectories reveals that employing strictly correct-only rewards (i.e., entirely removing length penalties for incorrect rollouts) is a necessary but insufficient condition for absolute stability. While the model may remain stable under certain initializations, it remains vulnerable to sudden collapse midway through the optimization process, characterized by a severe drop in both problem-solving accuracy and generation length. This sensitivity to initialization indicates a deeper, stochastic structural instability within GRPO driven by the over-compression of correct rollouts, a vulnerability that static correct-only rewards cannot reliably prevent.

To address these dual collapse pathways, we propose \acoer (Adaptive Correct-Only Efficiency Reward). \acoer combines three synergistic mechanisms: applying length penalties exclusively to correct rollouts to eliminate the primary collapse loop associated with penalizing incorrect ones, adaptive budget normalization to track the model's evolving conciseness, and a control-loop schedule that dynamically adjusts efficiency pressure based on accuracy preservation. Empirically, \acoer successfully maintains a robust accuracy-efficiency tradeoff. On the MATH-500~\citep{math500}, it substantially reduces token generation by 62\% while simultaneously maintaining problem-solving accuracy comparable to the base model.

In summary, our contributions establish both the empirical and theoretical foundations of reward collapse. We demonstrate that applying continuous length penalties to incorrect rollouts strongly predisposes the optimization process to collapse. Unlike prior empirical approaches that utilize algebraically equivalent correct-only rewards without a mechanistic explanation \citep{short-rl}, we provide a formal theoretical analysis demonstrating that GRPO's group normalization inherently creates divergent advantages when length penalties are continuously applied to incorrect rollouts. By identifying both the primary penalty-driven pathway and the stochastic correct-answer compression pathway, our work offers a comprehensive understanding of GRPO instability and provides \acoer as a principled, adaptive solution for efficient reasoning.

\section{Background and Related Work}
\label{sec:background}

\subsection{Reasoning Models and the Efficiency Bottleneck}
Recent advancements in RL have enabled language models to produce explicit, internalized reasoning traces within \thinktag{}...\thinktagend{} tags before generating final answers, moving beyond traditional prompt-based chain-of-thought (CoT) techniques \citep{wei2022chain, guo2025deepseek, yang2025qwen3}. While these structured, self-correcting traces allow models to achieve strong performance on complex mathematical and logical benchmarks, standard RL objectives are typically outcome-oriented. Consequently, a model receives the same reward for a correct answer produced with 50 tokens as one produced with 2,000. This lack of a training signal for concise reasoning inadvertently encourages the generation of thousands of tokens, establishing a severe efficiency bottleneck during inference.

\subsection{Efficient Reasoning via Length Penalties}
To mitigate this efficiency bottleneck, a rapidly growing line of research seeks to incorporate length-penalizing signals into RL rewards \citep{lcpo, dast, short-rl, fast-on-easy, alp, thinking-fast-right, grpo-lead, laser, recut}, while complementary trace-shortening approaches prune, remove, or otherwise shorten reasoning traces \citep{o1pruner, thinkprune, nowait}. Among reward-shaping approaches, Short-RL \citep{short-rl} has empirically demonstrated that applying length penalties exclusively to correct answers (algebraically equivalent to setting our incorrect-answer penalty to zero, $\beta{=}0$) can improve stability, but the mechanistic reasons behind why these correct-only rewards resist collapse remain an open question. Furthermore, characterizing the specific optimization conditions under which even correct-only rewards might experience stochastic instability represents a critical gap in the field. Orthogonal approaches have focused on demonstrating that standard optimization inherently compresses verbosity under specific optimal conditions \citep{conciser}, but explicit reward shaping remains the most direct and widely used intervention.

\subsection{GRPO and Reward Collapse}
The primary algorithm used to optimize these reasoning models, including those employing the aforementioned efficiency rewards, is GRPO \citep{shao2024deepseekmath}. For a given prompt $x$, GRPO samples a group of $G$ completions $\{y_1, \ldots, y_G\}$ and updates the policy based on advantages computed via group-normalized rewards:
\begin{equation}
    \hat{A}_i = \frac{r_i - \text{mean}(\{r_j\}_{j=1}^G)}{\text{std}(\{r_j\}_{j=1}^G)}
    \label{eq:grpo_advantage}
\end{equation}
However, a critical structural vulnerability in this formulation emerges when the standard deviation of rewards within a group approaches zero ($\text{std}(\{r_j\}) = 0$), nullifying the gradient. This pathology becomes highly active specifically when length-penalizing signals are introduced to combat verbosity, frequently precipitating a phenomenon known as \emph{reward collapse}. When reward collapse occurs, the RL agent optimizes toward a pathological minimum rather than the intended objective, characterized by a sudden and severe drop in both token generation length and problem-solving accuracy. While some prior works modify the normalization process itself \citep{drpo} to address instability, our work serves as a complementary, reward-side analysis. Rather than altering the core optimization algorithm, we rigorously identify which penalty designs remain mathematically safe under standard GRPO normalization, ultimately uncovering the hidden mechanisms that trigger both structural and stochastic collapse pathways.

\section{Reward Design Space and Experimental Setup}
\label{sec:reward_design}

To systematically diagnose the mechanisms driving reward collapse during efficiency training, we construct a unified evaluation framework. Rather than evaluating isolated methods with disparate implementations, we formalize a generalized reward space. This allows us to strictly control individual components, such as directionality and magnitude, and observe their direct impact on optimization stability. We evaluate 11 distinct reward configurations to isolate the specific triggers of instability.

\subsection{A Unified Framework for Efficiency Rewards}
\label{ssec:formulations}

To strictly isolate the cause of reward collapse, we define a generalized formulation that decouples the length penalties applied to correct reasoning traces from those applied to incorrect ones. The base reward function is defined as:

\begin{equation}
    r = c + c \cdot \alpha \cdot f(\ell) - (1 - c) \cdot \beta \cdot f(\ell) + r_{\text{format}}
    \label{eq:general_reward}
\end{equation}

Here, $c \in \{0, 1\}$ represents the outcome correctness of the final answer, and $\ell$ denotes the number of generated CoT tokens. The term $f(\ell)$ is a length-dependent function, such as a linear decay $(1 - \ell/L)$ where $L$ is the maximum token limit. Crucially, $\alpha \geq 0$ dictates the magnitude of the brevity bonus explicitly reserved for \textbf{correct} answers, while $\beta \geq 0$ controls the magnitude of the length penalty applied to \textbf{incorrect} answers. Finally, $r_{\text{format}}$ is a discrete format reward, strictly yielding $1$ if the model produces valid \thinktag{} and \texttt{\textbackslash boxed\{\}} tags, and $0$ otherwise. By manipulating $\alpha$ and $\beta$, we can precisely construct both existing baselines and targeted ablations to observe when the optimization process degenerates. 

\subsection{Taxonomy of Reward Configurations}
\label{ssec:taxonomy}

Using this generalized formulation, we categorize the 11 evaluated configurations along four critical design dimensions: \textbf{directionality} (whether length signals are applied to correct answers, incorrect answers, or both), \textbf{adaptivity} (fixed versus dynamically adjusted parameters), \textbf{boundedness} (whether the length gradient $|\partial r / \partial \ell|$ is bounded), and \textbf{signal form} (linear, exponential, or discrete binary).

Among these, \textbf{directionality} is the most critical distinction in our evaluation. Table~\ref{tab:key_formulas} illustrates how representative methods map to our framework based on this dimension. Specifically, methods with $\beta > 0$ apply continuous length signals to incorrect answers in addition to correct ones (\textit{Bidirectional}), while methods with $\beta = 0$ restrict length signals exclusively to correct answers (\textit{Correct-only}). In method names, acc denotes accuracy-only reward and LP denotes length penalty.

\begin{table}[t]
\centering
\resizebox{\columnwidth}{!}{%
\begin{tabular}{lllc}
\toprule
\textbf{Category} & \textbf{Method} & \textbf{Reward ($r$)} & \textbf{$\beta$} \\
\midrule
\multirow{2}{*}{Correct-Only} 
& GRPO (acc) & $c$ & 0 \\
& $\beta{=}0$ (Ablation) & $c + 0.3c(1{-}\ell/L)$ & 0 \\
\midrule
\multirow{2}{*}{Bidirectional} 
& $\beta{=}0.10$ (Ablation) & $c + 0.3c(1{-}\ell/L) - 0.1(1{-}c)(1{-}\ell/L)$ & 0.1 \\
& GRPO+LP & $c - 0.3\ell/L$ & 0.3 \\
\midrule
\bottomrule
\end{tabular}
}
\caption{Reward formulations for representative configurations evaluated in our diagnostic setup (format reward $r_{\text{format}}$ omitted for clarity). $L$ represents the maximum token limit, and LP denotes length penalty.}
\label{tab:key_formulas}
\end{table}

\subsection{Experimental Setup}
\label{ssec:experimental_setup}

To ensure that any observed instability stems strictly from the reward formulation rather than confounding training dynamics, all configurations are trained under completely identical optimization conditions. We train Qwen3-1.7B \citep{yang2025qwen3} on the NuminaMath-TIR dataset using standard GRPO with a group size of $G{=}16$ for 1,200 steps. 

The primary evaluation is conducted on the MATH-500 benchmark \citep{math500}, with out-of-domain transfer and robustness assessed on MATH-Hard \citep{math500}, American Invitational Mathematics Examination (AIME) 2025 \citep{aime2025}, and OlympiadBench \citep{he2024olympiadbench}. Further details regarding datasets and training hyperparameters are provided in Appendix~\ref{sec:appendix_hyperparams}.

\section{Diagnosing Reward Collapse}
\label{sec:diagnosis}

To rigorously investigate the mechanisms driving reward collapse, we first establish a quantitative criterion based on preliminary observations: a training regime is considered collapsed when the mean reasoning tokens fall below a task-dependent threshold (e.g., 500 tokens for MATH-500) and accuracy degrades by at least 5\% from its peak for $\geq$200 consecutive steps. Using this definition, we evaluate our generalized reward space (\S\ref{sec:reward_design}) to isolate the triggers of instability.

\subsection{Empirical Analysis of Length Penalties}
\label{ssec:empirical_analysis}

We first examine the impact of continuous length penalties on incorrect answers by training GRPO with a fixed correct-answer brevity bonus ($\alpha = 0.3$) across varying incorrect-answer penalty weights ($\beta \in \{0, 0.01, 0.05, 0.10\}$). 

As shown in Table~\ref{tab:beta0_trajectory}, the sensitivity analysis reveals a sharp phase boundary: all configurations with $\beta > 0$ systematically collapse under sustained optimization. Strikingly, even a marginal penalty ($\beta=0.01$), which is $30\times$ weaker than the correct-answer signal, suffices to trigger degeneration by step 800. This confirms that instability stems strictly from the presence of a non-zero length gradient rather than its magnitude. While larger $\beta$ values generally accelerate length concentration in expectation, the exact onset of collapse exhibits non-monotonic timing due to early-training trajectory fluctuations.

In contrast, setting $\beta=0$ (correct-only penalization) eliminates this structural failure. Across experiments in two random seeds in Table~\ref{tab:beta0_trajectory}, the first run maintains robust accuracy (84.4\%) and achieves a 59\% token reduction by step 1,200 without significant degradation. However, a critical vulnerability persists in another run: a different initialization of the strictly correct-only ($\beta=0$) configuration still collapses midway through training. This demonstrates that while correct-only rewards are necessary to prevent the primary failure mode, they are insufficient for absolute stability, leaving the model susceptible to stochastic degeneration.

\begin{table}[t]
\centering
\resizebox{\columnwidth}{!}{%
\setlength{\tabcolsep}{3pt}
\begin{tabular}{@{} l | cc ccc @{}}
\toprule
& \multicolumn{2}{c}{\textbf{$\beta=0$}} & \multicolumn{3}{c}{\textbf{$\beta > 0$}} \\
\cmidrule(lr){2-3} \cmidrule(lr){4-6}
\textbf{Step} & \textbf{Stable} & \textbf{Collapsed} & \textbf{0.01} & \textbf{0.05} & \textbf{0.10} \\
\midrule
Base & \multicolumn{5}{c}{88.8 / 5,553} \\
200  & 86.2/5142 & 87.8/4753 & \textbf{89.2}/4666 & 88.0/4793 & 88.2/5111 \\
400  & \textbf{88.6}/4678 & 87.8/2778 & 87.8/2879 & 88.0/3366 & 86.8/3874 \\
600  & 88.0/3937 & 80.6$^\dag$/1634 & 76.0$^\dag$/1598 & 83.8/1823 & \textbf{87.6}/2912 \\
800  & 87.0/2977 & 67.8$^\dag$/970 & 65.4$^\dag$/1005 & 62.8$^\dag$/832 & 82.8/1812 \\
1000 & 87.8/2571 & 55.8/984 & 62.4/881 & 56.0/594 & 67.8$^\dag$/1392 \\
1200 & 84.4/2255 & 59.6/591 & 65.8/901 & 61.8/991 & 66.2/1453 \\
\bottomrule
\end{tabular}
}
\caption{$\beta$ sensitivity and multi-seed analysis. Format: accuracy (\%) / mean total tokens. $^\dag$Collapse onset.}
\label{tab:beta0_trajectory}
\end{table}

\subsection{Mechanistic Analysis of Collapse Pathways}
\label{sec:mechanism}

We formalize these empirical findings by identifying two distinct collapse pathways driven by GRPO's group normalization mechanism (Equation~\ref{eq:grpo_advantage}).

\begin{figure*}[t]
\centering
\includegraphics[width=\textwidth]{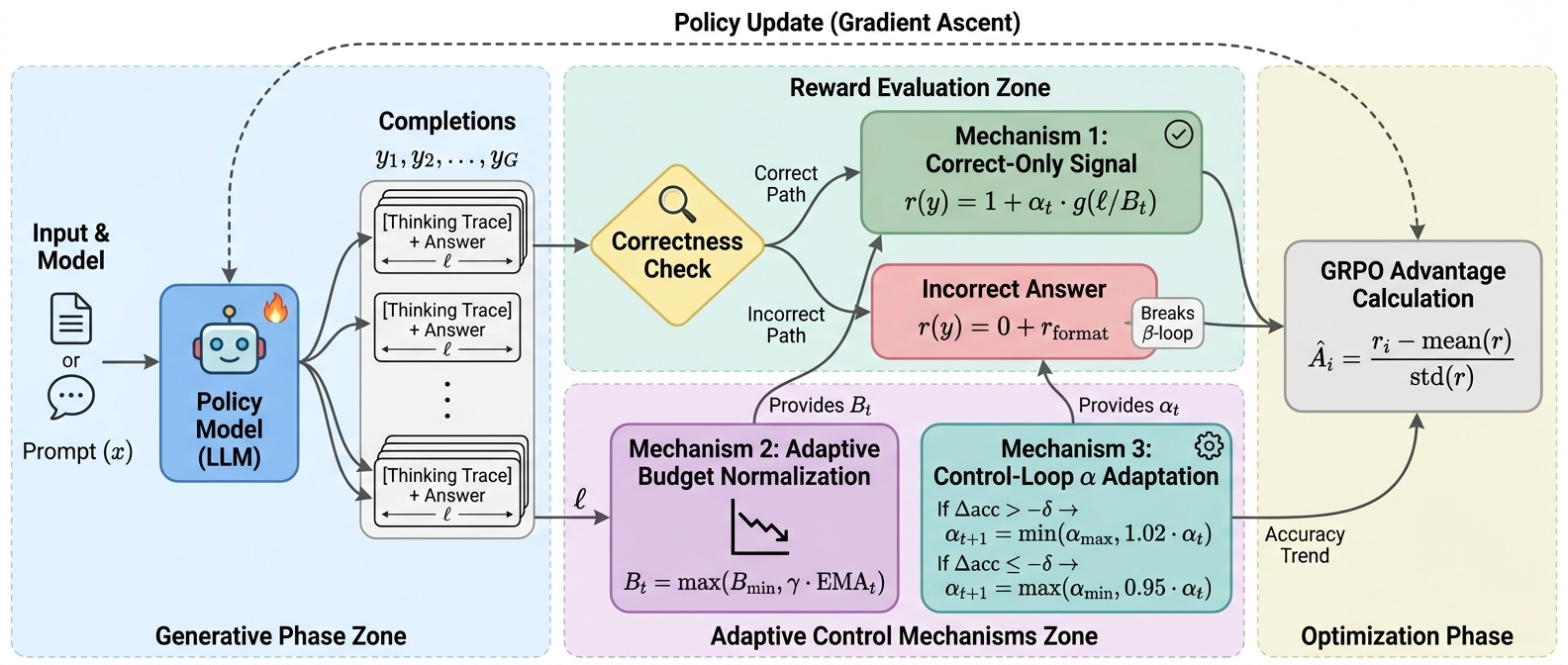}
\caption{\acoer framework overview. Three mechanisms target the identified collapse pathways: (1)~Correct-only signal ($\beta{=}0$) eliminates the $\beta$-loop (Pathway 1); (2)~Adaptive budget normalization tracks the exponential moving average (EMA) of correct-answer lengths, preventing runaway feedback; (3)~Control-loop $\alpha$ adaptation backs off efficiency pressure when accuracy drops, preventing Pathway 2 (correct-answer compression).}
\label{fig:acoer_framework}
\end{figure*}

\paragraph{Pathway 1: The \texorpdfstring{$\beta$}{beta}-Loop (Structural Collapse).} 
To build theoretical intuition for why continuous incorrect-answer length signals ($\beta > 0$) systematically destroy optimization, we analyze the gradients of the GRPO advantage function. We identify that this structural collapse is driven by two complementary forces: the accumulation of biased gradients in mixed groups, and intermittent explosive updates in homogeneous groups.

\textbf{1) Constant Downward Pressure in Mixed Groups.} 
In practical settings with a group size of $G \ge 16$, a generated batch typically contains a mix of correct ($r \approx 1$) and incorrect ($r \approx 0$) responses. In such mixed groups, the standard deviation of the rewards $\sigma_r$ is dominated by the large gap between correct and incorrect outcomes. Assuming an incorrect sample $i$ receives a length-penalized reward $r_i = -\beta \ell_i$ (where $\beta > 0$ is the penalty coefficient and $\ell_i$ is the token length), its normalized advantage $\hat{A}_i$ is approximated by:
\begin{equation}
\hat{A}_i = \frac{r_i - \bar{r}}{\sigma_r} \approx \frac{-\beta \cdot \ell_i - \bar{r}}{C}
\end{equation}
where $\bar{r}$ denotes the mean reward of the group and $C > 0$ is a strictly positive constant (e.g., $C \approx 0.4 \sim 0.5$) representing the typical reward variance in mixed batches. While the large denominator $C$ prevents the advantage from diverging in a single step, the gradient with respect to the generated length remains strictly negative:
\begin{equation}
\frac{\partial \hat{A}_i}{\partial \ell_i} \approx \frac{-\beta}{C} < 0
\end{equation}
Although this gradient is squashed by the constant $C$, it provides a \emph{constant downward pressure}. Over thousands of RL optimization steps, the relentless accumulation of these biased, negative gradients continuously penalizes the length of incorrect reasoning paths, eventually driving the policy to truncate reasoning entirely.

\textbf{2) Intermittent Explosive Divergence.} 
While mixed groups provide steady erosion, homogeneous groups (where all $G$ completions are incorrect) introduce severe instability. To formally demonstrate this extreme case, we consider a simplified setting where $G=2$ and both outputs are incorrect.

\begin{proposition}[Simplified Homogeneous Divergence]
Let $G=2$ with generated lengths $\ell_1 \neq \ell_2$. If both outputs are incorrect, the reward is solely determined by the length penalty: $r_i = -\beta \ell_i$. The absolute normalized advantage simplifies to:
\begin{equation}
|\hat{A}_i| = \frac{|r_i - \bar{r}|}{\sigma_r} = 1
\end{equation}
However, if the policy learns to output uniform lengths ($\ell_1 \approx \ell_2$), the variance $\sigma_r \to 0$. Any infinitesimal difference in length $\Delta \ell$ causes the advantage magnitude to diverge ($|\hat{A}| \to \infty$).
\end{proposition}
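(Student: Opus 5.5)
The plan is to compute the two-sample statistics explicitly, using the population standard deviation as in Equation~\ref{eq:grpo_advantage}. With $r_i=-\beta\ell_i$ for $i=1,2$, the group mean is $\bar r=-\beta(\ell_1+\ell_2)/2$. Hence
\[
r_1-\bar r=-\tfrac{\beta}{2}(\ell_1-\ell_2) \quad\text{and}\quad r_2-\bar r=+\tfrac{\beta}{2}(\ell_1-\ell_2).
\]
The standard deviation is $\sigma_r=\sqrt{\tfrac12\sum_i(r_i-\bar r)^2}=\tfrac{\beta}{2}|\ell_1-\ell_2|$. It is strictly positive because $\beta>0$ and $\ell_1\neq\ell_2$. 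Dividing gives $|\hat A_i|=1$ for both samples, with opposite signs: the shorter completion receives $+1$ and the longer one receives $-1$. This establishes the displayed identity.

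The notable feature here is that the magnitude is independent of both $\beta$ and $\Delta\ell=\ell_1-\ell_2$. This scale invariance is what the divergence claim must be reconciled with, so the plan addresses it in the second part.

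A literal reading of ``$|\hat A|\to\infty$'' conflicts with the exact computation: for $G=2$ and any $\Delta\ell\neq 0$, the standardized advantage is pinned at $\pm1$. I would therefore make the second claim precise in one of two ways, and state explicitly which one is proved.

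\textbf{Option (a): sensitivity of the normalized advantage.} Write $\hat A_i=\mathrm{sign}(\ell_2-\ell_1)\cdot(\pm1)$. As a function of $\Delta\ell$ it is a step function, so its derivative with respect to $\Delta\ell$ at $0$ is unbounded. An infinitesimal length difference flips the advantage from $-1$ to $+1$.

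\textbf{Option (b): the amplification ratio.} Compare the normalized advantage to the raw reward gap. We have $|\hat A_i|/|r_i-\bar r|=1/\sigma_r=2/(\beta|\Delta\ell|)\to\infty$ as $\Delta\ell\to0$. The normalization amplifies a vanishing reward difference by an unbounded factor, so the per-token gradient signal obtained from a negligible length difference has full unit magnitude.

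If the implementation regularizes the denominator as $\sigma_r+\epsilon$, this gives a third precise reading. The advantage becomes $\tfrac{\beta|\Delta\ell|/2}{\beta|\Delta\ell|/2+\epsilon}$, which stays bounded. So the statement should be understood as the $\epsilon\to0$ idealization of (b).

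The main obstacle is purely this interpretive gap between the exact $|\hat A|=1$ result and the divergence phrasing. I would resolve it by proving (b), together with the discontinuity in (a), as the rigorous content. I would then remark that, under the shared length penalty, gradient steps drive $\ell_1,\ell_2$ toward uniformity. In that regime the unit-magnitude, sign-flipping advantages produced by arbitrarily small $\Delta\ell$ inject high-variance updates. This is the intermittent explosive behavior the paragraph describes.
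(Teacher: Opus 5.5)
Your computation of $|\hat{A}_i|=1$ is correct. You are also right not to accept the divergence clause literally, and this exposes a flaw in the paper's own argument.

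The paper works with a general decreasing penalty $g(0,\ell)$. It linearizes around $\bar\ell$ to get $\sigma_r=|g'(0,\bar\ell)|\,\sigma_\ell$, cancels the $g'$ factors to reach $|\hat{A}|=|\Delta\ell|/(2\sigma_\ell)$, and then lets $\sigma_\ell\to 0^+$ to conclude $|\hat{A}|\to\infty$. That last step tacitly holds $\Delta\ell$ fixed while the spread shrinks, which is impossible for two samples. Indeed $\sigma_\ell=|\Delta\ell|/2$ exactly, so the paper's expression is identically $1$; this is the same cancellation you found.

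Your direct route is both shorter and stronger. No Taylor expansion is needed, since for $G=2$ one has $|r_i-\bar r|=\sigma_r=|r_1-r_2|/2$ for any rewards with $r_1\neq r_2$, whatever the penalty shape. A larger group cannot rescue the literal claim either. By Samuelson's inequality $|r_i-\bar r|\le\sqrt{G-1}\,\sigma_r$, so the normalized advantage is bounded by $\sqrt{G-1}$ however concentrated the lengths are.

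What genuinely survives is your option (b). Since $\sigma_r=\beta|\Delta\ell|/2\to 0$, the normalization gain $1/\sigma_r$ diverges, and an arbitrarily small reward gap is mapped to a unit-magnitude advantage. This is also what the surrounding text (``the normalization denominator vanishes'') relies on. Proving (b), with (a) and the $\sigma_r+\epsilon$ remark as supplements, is the right resolution.

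Two small precision points:
\begin{itemize}
\item In (a), the advantage has a jump discontinuity at $\Delta\ell=0$, where it is $0/0$ and is set to $0$ in practice. So speak of unbounded difference quotients rather than an unbounded derivative.
\item The value $1$ relies on the population standard deviation. The Bessel-corrected one gives $1/\sqrt{2}$, which is still a constant, so your conclusion is unaffected.
\end{itemize}
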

\vspace{-1mm}
\noindent\textit{Proof in Appendix~\ref{sec:appendix_proof}.}

In this homogeneous scenario, the normalization denominator vanishes, turning the length penalty into a runaway positive feedback loop. Even at moderate accuracy levels (${\sim}70\%$), statistical fluctuations dictate that a non-trivial fraction of groups will contain predominantly incorrect answers (see Appendix~\ref{sec:appendix_group_stats}). When these homogeneous batches occur, they inject explosive gradient spikes that violently pull the policy toward zero length.

\textbf{Synthesis.} The combination of these two forces explains the empirical inevitability of the $\beta$-loop: mixed groups provide a steady, slow-burning accumulation of length-reducing gradients, while intermittent homogeneous groups act as explosive catalysts. This explains why even a diminutive penalty ($\beta=0.01$) is sufficient to trigger collapse. Furthermore, it elucidates why certain bidirectional penalty methods remain stable if they use discrete signals:

\begin{corollary}[Binary immunity]
Methods utilizing discrete threshold penalties employ a piecewise constant function where $\partial g/\partial \ell = 0$ almost everywhere.
Without continuous gradients, the feedback loop cannot form, rendering binary penalties stable regardless of directionality.
\end{corollary}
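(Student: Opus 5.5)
The plan is to make the informal ``feedback loop'' of Pathway~1 precise and show that each of its two ingredients disappears when the length signal is piecewise constant. Write the bidirectional reward as $r = c + c\,\alpha\, g(\ell) - (1-c)\,\beta\, g(\ell) + r_{\text{format}}$, where $g$ is a threshold function, e.g.\ $g(\ell)=\mathbbm{1}[\ell > \tau]$ or more generally a step function with finitely many jumps. The first step is immediate: $g$ is constant on each open interval between jumps, so $\partial g/\partial \ell = 0$ everywhere except on a finite, hence measure-zero, set of lengths. The reward of an incorrect sample is therefore locally independent of $\ell$, and the mixed-group estimate above becomes $\partial \hat{A}_i/\partial \ell_i \approx -\beta\, g'(\ell_i)/C = 0$ almost everywhere. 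So the constant downward pressure of force~1 is absent. What remains in a mixed group is a single finite advantage offset for crossing the threshold. Once lengths sit below $\tau$, this offset gives no further incentive to shorten.

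The second step handles force~2, the homogeneous all-incorrect groups. I would redo the $G=2$ computation of the Simplified Homogeneous Divergence proposition with $r_i = -\beta g(\ell_i)$. The rewards then take values in a finite set. Either $g(\ell_1)=g(\ell_2)$, in which case the rewards coincide exactly and, under the standard convention, the advantage is set to $0$ rather than computed as $0/0$; or the two values differ by a fixed jump, in which case $\sigma_r$ is bounded below by half the minimal jump size times $\beta$. In both cases $|\hat{A}_i|\le 1$ for $G=2$, and for general $G$ the bound is $|\hat{A}_i|\le\sqrt{G-1}$. The key contrast with the continuous case is that no configuration drives $\sigma_r\to 0^+$ while keeping a nonzero numerator. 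An infinitesimal change $\Delta\ell$ changes no reward unless it crosses a jump, and crossing a jump produces a variance of fixed size. This removes the divergence $|\hat{A}|\to\infty$ that fuels the runaway loop.

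The last step is synthesis. With both forces gone, there is neither a persistent length-reducing gradient nor unbounded spikes, so the directionality of the penalty, i.e.\ whether $\beta>0$, no longer triggers the Pathway~1 mechanism. I expect this step to be the main obstacle, because ``stable regardless of directionality'' is not literally a theorem. A threshold penalty still pushes incorrect lengths below $\tau$ through the bounded offset, and policy-gradient updates act on sequence probabilities, not on $\ell$ directly. I would therefore state the claim carefully: the \emph{continuous feedback loop} of Pathway~1 cannot form, since advantages are bounded and have zero length derivative almost everywhere. I would not claim global convergence. Stochastic compression of the Pathway~2 type is left outside the scope of the corollary.
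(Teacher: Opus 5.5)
Your route is the paper's route. The paper gives no separate proof of the corollary; its only justification is the observation that $\partial g/\partial\ell=0$ almost everywhere. That breaks the $g'\neq 0$ hypothesis behind both forces of Pathway~1: the slope $-\beta/C$ in mixed groups, and the assumption $g'(0,\bar\ell)\neq 0$ that lets the $g'$ terms cancel in the proof of Proposition~1. Your Step~1 reproduces this. Two things you add are improvements: your explicit case of groups that straddle a jump (where the paper's first-order Taylor expansion is not valid), and your caveat that ``stable regardless of directionality'' is not literally a theorem.

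The part that does not do what you claim is the contrast in Step~2.
\begin{itemize}
\item \textbf{The bound is not specific to thresholds.} The bound $|\hat A_i|\le\sqrt{G-1}$ holds for any rewards with $\sigma_r>0$. The centered values $d_j=r_j-\bar r$ satisfy $d_i=-\sum_{j\neq i}d_j$, and Cauchy--Schwarz then gives $d_i^2\le(G-1)\sigma_r^2$ (population standard deviation) whatever $g$ is.
\item \textbf{The continuous case gives the same value.} Proposition~1 itself records $|\hat A_i|=1$ for the linear penalty at $G=2$. You obtain the same value for the binary penalty whenever the two rewards differ. So ``this removes the divergence $|\hat A|\to\infty$'' removes nothing, and ``advantages are bounded'' cannot support your conclusion.
\item \textbf{A zero derivative almost everywhere is not sufficient either.} This undercuts your extension to step functions with finitely many jumps. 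Take a fine staircase approximating $\ell/L$: it has zero derivative almost everywhere and jumps of fixed size. Yet for $G=2$, whenever the two lengths lie on different steps, the shorter sample gets exactly the advantage it would get under the linear penalty. The shortening pressure therefore persists down to the lowest step.
\end{itemize}

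What actually separates a single threshold from the continuous case is the plateau you mention only in passing. The advantages depend only on which side of $\tau$ each sample lies. They vanish when the whole group is on one side, so they exert no pressure once lengths are below $\tau$. Make that the content of Step~2. Then state the result for penalties that are constant on a length range $[0,\tau)$ wide enough for the reasoning the task needs. The paper's one-line argument rests on the same almost-everywhere derivative, so it needs this restriction as well.
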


\paragraph{Pathway 2: Stochastic Correct-Answer Compression.} 
The secondary collapse observed under $\beta=0$ exposes an intrinsic vulnerability within the correct-answer subgroup.
Even when incorrect penalties are removed entirely, the $\alpha$-term continuously differentiates short-correct from long-correct outputs.
If a particular initialization or early training dynamic causes the correct-answer length distribution to over-compress, the identical divergence logic from Proposition~1 applies to the correct-answer subgroup, triggering a stochastic collapse.

\paragraph{Implications.}
This dual vulnerability establishes a clear premise: while correct-only rewards ($\beta=0$) are a mathematical necessity for eliminating the structural $\beta$-loop, their static application cannot reliably prevent correct-answer over-compression.
This limitation necessitates dynamic parameter adjustment, which we formalize in the \acoer framework (\S\ref{sec:acoer}).

\begin{table*}[t]
\centering
\small
\setlength{\tabcolsep}{4pt}
\resizebox{\textwidth}{!}{%
\begin{tabular}{l cc cc cc cc}
\toprule
& \multicolumn{2}{c}{\textbf{MATH-500}} & \multicolumn{2}{c}{\textbf{MATH-Hard}} & \multicolumn{2}{c}{\textbf{AIME 2025}} & \multicolumn{2}{c}{\textbf{OlympiadBench}} \\
\cmidrule(lr){2-3} \cmidrule(lr){4-5} \cmidrule(lr){6-7} \cmidrule(lr){8-9}
\textbf{Method} & \textbf{Acc (\%)} & \textbf{\# Tokens} & \textbf{Acc (\%)} & \textbf{\# Tokens} & \textbf{Acc (\%)} & \textbf{\# Tokens} & \textbf{Acc (\%)} & \textbf{\# Tokens} \\
\midrule
\multicolumn{9}{c}{\textit{Baseline}} \\
Base   & 88.8 & 5,553 & 76.4 & 7,958 & 30.0 & 13,298 & 55.3 & 9,579 \\
\midrule
\rowcolor{gray!15}
\acoer              & \underline{88.4} & \textbf{2,134} \,{\scriptsize ($\downarrow$62\%)} & \textbf{78.1} & \textbf{3,509} \,{\scriptsize ($\downarrow$56\%)} & \textbf{36.7} & \underline{8,922} \,{\scriptsize ($\downarrow$33\%)} & \underline{55.3} & \underline{5,177} \,{\scriptsize ($\downarrow$46\%)} \\
GRPO-acc$^*$         & \textbf{88.8} & 4,091 \,{\scriptsize ($\downarrow$26\%)} & \underline{77.6} & 6,277 \,{\scriptsize ($\downarrow$21\%)} & \underline{33.3} & 11,836 \,{\scriptsize ($\downarrow$11\%)} & \textbf{56.2} & 7,982 \,{\scriptsize ($\downarrow$17\%)} \\
$\beta{=}0$ (s42)   & 84.4 & \underline{2,255} \,{\scriptsize ($\downarrow$59\%)} & 70.9 & 4,149 \,{\scriptsize ($\downarrow$48\%)} & 10.0 & \textbf{8,191} \,{\scriptsize ($\downarrow$38\%)} & 33.2 & \textbf{4,767} \,{\scriptsize ($\downarrow$50\%)} \\
\midrule
GRPO+LP              & 68.0 & 561 \,{\scriptsize ($\downarrow$90\%)} & 40.2 & 965 \,{\scriptsize ($\downarrow$88\%)} & 0.0 & 1,810 \,{\scriptsize ($\downarrow$86\%)} & 32.5 & 1,222 \,{\scriptsize ($\downarrow$87\%)} \\
\quad @400$^\dag$    & 86.8 & 3,279 \,{\scriptsize ($\downarrow$41\%)} & 77.8 & 5,186 \,{\scriptsize ($\downarrow$35\%)} & 30.0 & 11,643 \,{\scriptsize ($\downarrow$12\%)} & 54.2 & 7,150 \,{\scriptsize ($\downarrow$25\%)} \\
GRPO-LEAD            & 55.8 & 1,005 \,{\scriptsize ($\downarrow$82\%)} & 36.1 & 1,716 \,{\scriptsize ($\downarrow$78\%)} & 10.0 & 4,594 \,{\scriptsize ($\downarrow$65\%)} & 30.1 & 2,358 \,{\scriptsize ($\downarrow$75\%)} \\
\quad @200$^\dag$    & 88.8 & 4,718 \,{\scriptsize ($\downarrow$15\%)} & 77.9 & 7,110 \,{\scriptsize ($\downarrow$11\%)} & 20.0 & 13,396 \,{\scriptsize ($\uparrow$1\%)} & 56.4 & 8,780 \,{\scriptsize ($\downarrow$8\%)} \\
ReCut                & 65.6 & 943 \,{\scriptsize ($\downarrow$83\%)} & 46.8 & 1,859 \,{\scriptsize ($\downarrow$77\%)} & 10.0 & 4,853 \,{\scriptsize ($\downarrow$64\%)} & 31.3 & 2,401 \,{\scriptsize ($\downarrow$75\%)} \\
\quad @400$^\dag$    & 87.6 & 2,487 \,{\scriptsize ($\downarrow$55\%)} & 73.6 & 4,191 \,{\scriptsize ($\downarrow$47\%)} & 30.0 & 9,933 \,{\scriptsize ($\downarrow$25\%)} & 55.0 & 5,888 \,{\scriptsize ($\downarrow$39\%)} \\
\bottomrule
\end{tabular}%
}
\caption{Main results across four benchmarks. \# Tokens denotes the mean total generated tokens (reasoning and final response). Percentages in parentheses ($\downarrow$/$\uparrow$) indicate the token change relative to the Base model. Stable methods report metrics at step 1,200; collapsed methods at step 1,200 and at their pre-collapse peak ($^\dag$). $^*$GRPO-acc denotes accuracy-only GRPO and is reported at step 1,000 (peak within evaluation window); LP denotes length penalty.}
\label{tab:main_results}
\end{table*}

\section{\acoer: Adaptive Solution}
\label{sec:acoer}

To address the dual collapse vulnerabilities identified, we propose \acoer (Adaptive Correct-Only Efficiency Reward). Building upon the generalized formulation from Section~\ref{sec:reward_design}, \acoer introduces dynamic parameter control through three complementary mechanisms (illustrated in Figure~\ref{fig:acoer_framework}). At optimization step $t$, the reward for a generated response $y$ of length $\ell$ is defined as:
\begin{equation}
r(y) = \begin{cases}
1 + \alpha_t \cdot g(\ell / B_t) & \text{if correct} \\
0 + r_{\text{format}} & \text{if incorrect}
\end{cases}
\label{eq:acoer}
\end{equation}
where $g(x) = \log(1 + k(1{-}x))/\log(1{+}k)$ with $k{=}5$ serves as a smoothed scaling function. The components $\beta{=}0$, $B_t$, and $\alpha_t$ correspond to the following three mechanisms.

\subsection{Mechanism 1: Correct-Only Signal (\texorpdfstring{$\beta{=}0$}{beta=0})}
To mitigate the structural divergence caused by penalizing incorrect reasoning (Pathway 1), \acoer enforces a correct-only efficiency signal. By explicitly setting $\beta{=}0$, incorrect completions receive no length penalty, isolating the brevity bonus entirely to successful problem-solving trajectories. This design removes the primary catalyst for the $\beta$-loop.

\subsection{Mechanism 2: Adaptive Budget Normalization (\texorpdfstring{$B_t$}{B\_t})}
Standard efficiency rewards normalize token lengths against a fixed maximum limit $L$. However, as the model learns to generate shorter responses, fixed normalization yields progressively larger efficiency bonuses for minor token reductions, potentially encouraging over-compression. To mitigate this, \acoer replaces $L$ with a dynamic budget $B_t$ that tracks the exponential moving average (EMA) of correct-answer reasoning lengths. Specifically, $B_t = \max(B_{\min}, \gamma \cdot \text{EMA}_t)$, where $\gamma{=}0.85$ is a scaling factor and $B_{\min}$ prevents the budget from decaying to zero. By proportionally shrinking the budget as average lengths decrease, this mechanism stabilizes the $\ell/B_t$ ratio and helps moderate runaway feedback.

\subsection{Mechanism 3: Control-Loop Penalty Adaptation (\texorpdfstring{$\alpha_t$}{alpha\_t})}
While Mechanisms 1 and 2 address the structural $\beta$-loop and steady-state normalization, the model remains susceptible to stochastic correct-answer compression (Pathway 2) if efficiency pressure compromises reasoning accuracy. To safeguard against this, \acoer dynamically adjusts the penalty weight $\alpha_t$ based on accuracy preservation:
\begin{equation}
\alpha_{t+1} = \begin{cases}
\min(\alpha_{\max}, 1.02 \cdot \alpha_t) & \text{if } \Delta\text{acc} > -\delta \\
\max(\alpha_{\min}, 0.95 \cdot \alpha_t) & \text{if } \Delta\text{acc} \leq -\delta
\end{cases}
\end{equation}
where $\Delta\text{acc}$ is the change in moving-average accuracy measured over a 100-step sliding window, and $\delta{=}0.02$ acts as a tolerance threshold. Beginning with a conservative initial weight ($\alpha_0{=}0.02$), the system gradually increases efficiency pressure. If accuracy drops beyond the threshold, $\alpha_t$ decays more rapidly (5\% decrease) compared to its growth rate (2\% increase). This asymmetric adjustment facilitates recovery from incipient stochastic collapse while keeping the penalty weight bounded within $[\alpha_{\min}, \alpha_{\max}]$.

\section{Results}
\label{sec:results}

Table~\ref{tab:main_results} presents peak results across four benchmarks within the 1,200-step evaluation window.

\paragraph{Structural Collapse Dichotomy.}
Among the evaluated methods employing continuous $\beta > 0$, the collapse rate is 100\% (8/8). In contrast, among correct-only and binary methods, the rate is 17\% (1/6, with only the static $\beta{=}0$ ablation collapsing). This dichotomy, supported by the sensitivity analysis in Section~\ref{ssec:empirical_analysis}, suggests that continuous length signals on incorrect answers are a primary driver of structural collapse.

\paragraph{Performance on MATH-500.}
On the MATH-500 benchmark, \acoer maintains comparable accuracy while substantially reducing token usage relative to the base model. At step 1200, it achieves a peak accuracy of 88.4\% with a mean of 2,134 total tokens, representing a 62\% reduction from the base model. The token allocation suggests task-adaptive behavior: the model remains concise on solvable problems but expends more effort on harder queries.

\begin{figure}[t]
\centering
\includegraphics[width=0.9\columnwidth]{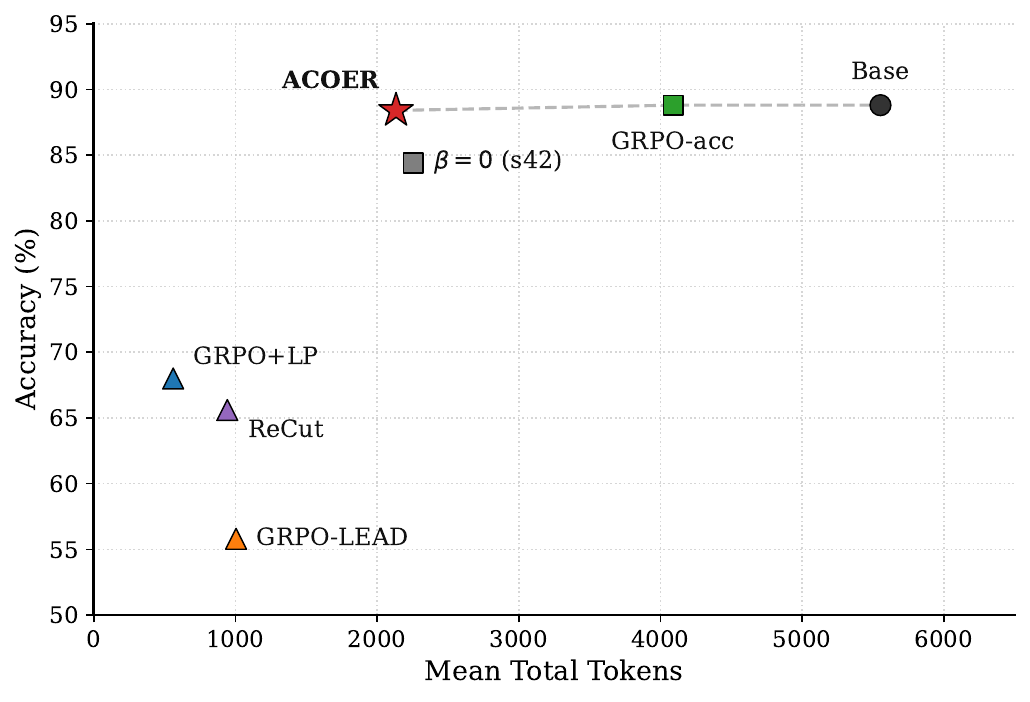}
\caption{Accuracy vs.\ efficiency for stable methods on MATH-500.}
\label{fig:pareto}
\end{figure}

\begin{figure*}[t]
\centering
\includegraphics[width=0.9\textwidth]{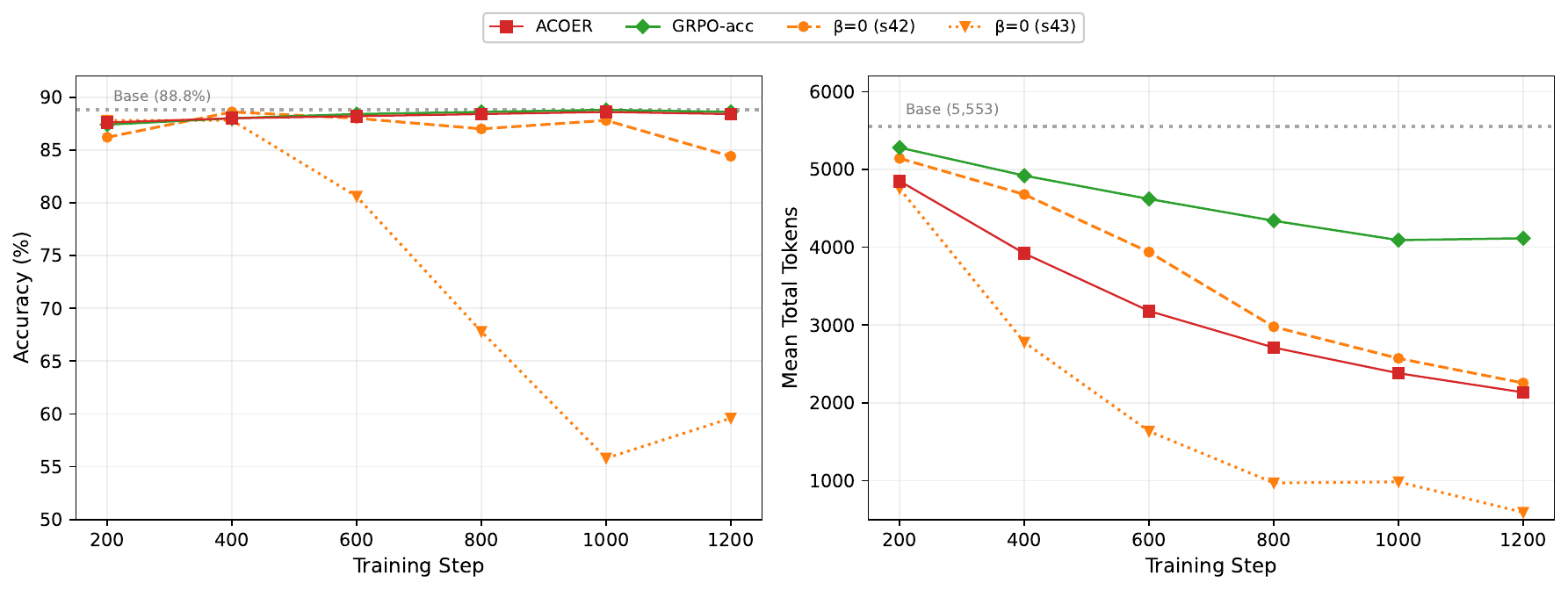}
\caption{Training dynamics for representative methods on MATH-500.}
\label{fig:main_trajectory}
\end{figure*}

\paragraph{Comparison with Collapsed Methods.}
In Figure~\ref{fig:main_trajectory}, other collapsed methods (e.g., $\beta{=}0.05$: 55.2\% accuracy / 1,119 tokens) exhibit severe length concentration, compressing both correct and incorrect answers uniformly. Consequently, their accuracy on the most challenging subsets (Level 5) drops significantly. \acoer mitigates this degradation by maintaining a broader dynamic range in its length distribution.

\paragraph{Per-Difficulty Analysis}
\label{sec:per_difficulty}

To better contextualize \acoer's efficiency, we analyze its performance across the five difficulty levels of MATH-500 in Table~\ref{tab:per_difficulty}. 

\paragraph{Accuracy Gains Across Complexities.}
The base model exhibits a predictable accuracy drop from 97.7\% on Level 1 to 78.4\% on Level 5. \acoer @1200 improves accuracy on Levels 2, 3, and 4, while maintaining equivalent performance on Level 1. Notable gains are observed on medium-difficulty problems (Level 3: +1.0pp, from 91.4\% to 92.4\%) and hard problems (Level 4: +0.8pp, from 89.8\% to 90.6\%), though accuracy on the most difficult problems (Level 5) experiences a slight decrease (from 78.4\% to 74.6\%).

\paragraph{Difficulty-Adaptive Token Allocation.}
Rather than applying uniform truncation, \acoer adapts its token usage to problem difficulty. On Level 1 problems, total tokens are reduced by 78.6\% (2,814 to 603 tokens). In contrast, for Level 5 problems, \acoer allocates a considerably larger reasoning budget of 3,986 tokens (a 52.6\% reduction). This confirms that \acoer preserves necessary computational depth for complex tasks while streamlining simple queries.

\begin{table}[t]
\centering
\footnotesize
\begin{tabular}{llccc}
\toprule
\textbf{Level} & \textbf{Method} & \textbf{Acc (\%)} & \textbf{\# Tokens} & \textbf{$\Delta$Tok} \\
\midrule
\multirow{2}{*}{1 (Easy)}
& Base & \textbf{97.7} & 2,814 & -- \\
& \acoer & \textbf{97.7} & 603 & $-$78.6\% \\
\midrule
\multirow{2}{*}{2}
& Base & 95.6 & 3,266 & -- \\
& \acoer & \textbf{96.7} & 845 & $-$74.1\% \\
\midrule
\multirow{2}{*}{3 (Med)}
& Base & 91.4 & 4,374 & -- \\
& \acoer & \textbf{92.4} & 1,580 & $-$63.9\% \\
\midrule
\multirow{2}{*}{4 (Hard)}
& Base & 89.8 & 6,062 & -- \\
& \acoer & \textbf{90.6} & 2,071 & $-$65.8\% \\
\midrule
\multirow{2}{*}{5 (V.Hard)}
& Base & \textbf{78.4} & 8,404 & -- \\
& \acoer & 74.6 & 3,986 & $-$52.6\% \\
\bottomrule
\end{tabular}
\caption{Per-difficulty comparison: \acoer @1200 vs.\ base model.}
\label{tab:per_difficulty}
\end{table}

\section{Conclusion}
\label{sec:conclusion}

In this work, we identify continuous incorrect-answer length signals as a primary structural driver of reward collapse in GRPO efficiency training. We demonstrate that penalizing the length of incorrect reasoning paths systematically degrades model performance (Pathway 1). While a correct-only length signal ($\beta=0$) removes this primary failure mode, models remain vulnerable to stochastic instability driven by over-compression of correct answers (Pathway 2).

To address these dual vulnerabilities, we propose \acoer, an adaptive framework governed by three principles: correct-only length signals to prevent structural collapse, dynamic budget normalization to mitigate runaway feedback, and accuracy-gated efficiency pressure to ensure safe optimization. Across four mathematical reasoning benchmarks, \acoer effectively balances accuracy and efficiency. On MATH-500, it maintains accuracy comparable to the base model while reducing token generation by 62\%. Crucially, these efficiency gains generalize to more challenging datasets, confirming that \acoer adaptively scales reasoning budgets according to task difficulty. Ultimately, our findings provide stable design paradigms for efficiency-driven reinforcement learning.

\section*{Limitations}

\paragraph{Model Scale.} 
Our experiments are conducted using a single model scale (Qwen3-1.7B). Whether the stochastic vulnerabilities and the efficacy of \acoer's mechanisms generalize to larger models (e.g., 7B parameters or more) requires further validation, as increased scale may introduce different optimization dynamics.

\paragraph{Compute Constraints on Multi-Seed Analysis.} 
The stochastic vulnerability of static methods was established using a limited number of initializations. Precisely characterizing the collapse probability across diverse seeds requires extensive experimentation, which is computationally prohibitive at our current scale (requiring approximately 48 GPU-hours per 1,200-step run).

\paragraph{Statistical Significance of Accuracy Gains.} 
While \acoer significantly improves efficiency, its marginal accuracy improvements over stable baselines such as GRPO-acc are not statistically significant given the evaluation set sizes. Detecting a 1\% to 2\% difference reliably would require $n \approx 2{,}000$. Thus, the primary advantage of \acoer lies in its substantial token reduction (efficiency) rather than absolute accuracy superiority.

\paragraph{Cross-Benchmark Variance.} 
Cross-benchmark evaluations reveal a difficulty-dependent efficiency frontier. Although \acoer demonstrates robust accuracy on MATH-500, MATH-Hard, and OlympiadBench, the results on AIME 2025 exhibit minor performance gaps and higher variance, partially due to the dataset's small sample size.

\paragraph{Theoretical Completeness.} 
Our formal analysis (Proposition~1) is exact for a group size of $G=2$. While we extend the intuition to a general $G$ using same-correctness subgroup arguments supported by empirical verification, a complete formal proof for arbitrary group compositions remains an open problem.

\paragraph{Hyperparameter Sensitivity.} 
The \acoer framework introduces multiple hyperparameters (detailed in Appendix~\ref{sec:appendix_acoer}). Although the control-loop architecture is designed to self-adjust and we utilized documented defaults without extensive tuning, the sensitivity of these parameters across varying domains and base models has not been exhaustively explored.

\section*{Ethics Statement}

\paragraph{Potential Risks.}
This work aims to improve computational efficiency, but methods that shorten reasoning traces can be misused to reduce transparency or deployed prematurely in settings where concise outputs mask brittle reasoning. We therefore recommend using \acoer only with task-specific validation, monitoring both accuracy and length distributions, and avoiding safety-critical deployment based solely on token savings.

\paragraph{Artifacts, Licenses, and Intended Use.}
We use publicly available artifacts: Qwen3-1.7B \citep{yang2025qwen3}, NuminaMath-TIR \citep{numina}, MATH-500 and MATH-Hard \citep{math500}, AIME 2025 \citep{aime2025}, OlympiadBench \citep{he2024olympiadbench}, TRL \citep{trl}, Low-Rank Adaptation (LoRA) \citep{hu2022lora}, DeepSpeed Zero Redundancy Optimizer (ZeRO) \citep{zero}, vLLM \citep{vllm}, and \texttt{math\_verify} \citep{mathverify}. Qwen3-1.7B, NuminaMath-TIR, TRL, DeepSpeed, vLLM, and \texttt{math\_verify} are released under Apache-2.0 licenses. We use these artifacts for their intended research, training, and evaluation purposes. Artifacts created by this work, including reward definitions, trained adapters, evaluation logs, and figures, are intended for research comparison rather than direct safety-critical deployment.

\paragraph{Data Content and Privacy.}
We do not collect new user data or human-subject data. The datasets used in this work consist of mathematical problem statements, solutions, and final answers rather than demographic annotations. We checked the dataset schemas and examples used by our pipeline and found no fields intended to name or uniquely identify individual people. We report aggregate metrics only and do not redistribute benchmark text or model-generated traces in the paper.

\paragraph{AI Assistance.}
AI systems were not used to conduct experiments, generate experimental data, or compute reported results. They were used only for LaTeX editing and language polishing.

\bibliography{custom}

\appendix

\section{Proof of Proposition 1}
\label{sec:appendix_proof}

\noindent \textbf{Proof.}
Consider a group of size \(G=2\) with two incorrect completions. Let their lengths be \(\ell_1\) and \(\ell_2\), with \(\ell_1 < \ell_2\). 
Since the incorrect-answer length penalty function \(g(0, \ell)\) is monotonically decreasing with respect to \(\ell\), the corresponding rewards satisfy \(r_1 > r_2\). 

For a group of size \(G=2\), the mean reward is \(\bar{r} = \frac{r_1 + r_2}{2}\). Therefore, the centered reward for the first completion is \(r_1 - \bar{r} = \frac{r_1 - r_2}{2}\). The GRPO advantages for these two completions are calculated as:
\begin{equation}
\begin{aligned}
\hat{A}_1 &= \frac{r_1 - r_2}{2\sigma_r} \\
\hat{A}_2 &= -\hat{A}_1
\end{aligned}
\end{equation}

By applying a first-order Taylor expansion around the mean length \(\bar{\ell} = \frac{\ell_1 + \ell_2}{2}\), the reward for each completion can be approximated as \(r_i \approx g(0, \bar{\ell}) + g'(0, \bar{\ell})(\ell_i - \bar{\ell})\), where \(g'(0, \bar{\ell}) = \left. \frac{\partial g(0, \ell)}{\partial \ell} \right|_{\ell=\bar{\ell}}\).
This linear approximation implies that the standard deviation of the rewards (\(\sigma_r\)) is directly proportional to the standard deviation of the lengths (\(\sigma_\ell\)):
\begin{equation}
\sigma_r = |g'(0, \bar{\ell})| \cdot \sigma_\ell
\end{equation}

Substituting \(\sigma_r\) into the magnitude of the advantage yields:
\begin{equation}
|\hat{A}| = \frac{|g'(0, \bar{\ell})| \cdot |\Delta\ell|}{2 |g'(0, \bar{\ell})| \cdot \sigma_\ell} = \frac{|\Delta\ell|}{2\sigma_\ell}
\end{equation}

Assuming the continuous length signal ensures a non-zero gradient (\(g'(0, \bar{\ell}) \neq 0\)), the \(g'\) terms cancel out. As the length distribution concentrates within the subgroup (\(\sigma_\ell \to 0^+\)), the normalized advantage \(|\hat{A}|\) diverges to infinity.

Conversely, under the correct-only condition (\(\beta=0\)), the penalty function is flat for incorrect answers, meaning \(g(0, \ell) = 0\) for all \(\ell\). Consequently:
\begin{equation}
\begin{aligned}
r_1 &= r_2 \\
\sigma_r &= 0 \\
\hat{A} &= 0
\end{aligned}
\end{equation}

Thus, the advantages vanish entirely for all \(\sigma_\ell\), structurally preventing the divergent feedback loop.

\section{Hyperparameter and Training Details}
\label{sec:appendix_hyperparams}

\begin{table}[h]
\centering
\small
\begin{tabular}{ll}
\toprule
\textbf{Hyperparameter} & \textbf{Value} \\
\midrule
LoRA rank / $\alpha$ & 16 / 32 \\
Learning rate & 5e-6 \\
Per-device batch size & 1 \\
Gradient accumulation & 8 \\
group size ($G$) & 16 \\
Temperature & 1.0 \\
Max completion ($L$) & 8,192 thinking tokens \\
Eval max tokens & 16,384 \\
Training steps & 1,200 \\
Optimizer & AdamW ($\beta_1{=}0.9$, $\beta_2{=}0.999$) \\
Weight decay & 0.01 \\
Warmup ratio & 0.1 \\
DeepSpeed & ZeRO Stage 2 \\
Precision & bfloat16 \\
\bottomrule
\end{tabular}
\caption{Full training hyperparameters. All configurations share identical settings except reward function. LoRA denotes Low-Rank Adaptation.}
\label{tab:hyperparams}
\end{table}

\paragraph{Model and Data.} We train Qwen3-1.7B \citep{yang2025qwen3}, a reasoning model with native \thinktag{}...\thinktagend{} support. The training data comprises the NuminaMath-TIR dataset \citep{numina} (${\sim}$15K English mathematical-reasoning training problems, Apache 2.0). Evaluation benchmarks are treated solely as held-out evaluation sets: MATH-500 \citep{math500} ($n{=}500$, primary evaluation), MATH-Hard \citep{math500} ($n{=}1{,}324$, hard competition problems), AIME 2025 \citep{aime2025} ($n{=}30$, competition-level), and OlympiadBench \citep{he2024olympiadbench} ($n{=}674$, competition math). These artifacts cover mathematical reasoning rather than demographic groups or socially sensitive linguistic phenomena.

\paragraph{Training Protocol.} As detailed in Table~\ref{tab:hyperparams}, all methods share identical training conditions to ensure a controlled comparison: Low-Rank Adaptation (LoRA) \citep{hu2022lora} (rank 16, $\alpha = 32$), learning rate $5{\times}10^{-6}$, GRPO implemented with TRL \citep{trl} with a group size of $G = 16$, temperature 1.0, and maximum completion length of 8,192 tokens. We use gradient accumulation of 8 and DeepSpeed ZeRO-2 \citep{zero}, training for 1,200 steps on NVIDIA A6000 48GB GPUs. Each 1,200-step training run requires approximately 48 GPU-hours, so the 11 reported full training configurations require approximately 528 GPU-hours, excluding preliminary pilot runs. Checkpoints are evaluated every 200 steps using vLLM \citep{vllm} with greedy decoding and 16,384 max tokens. We focus on a single model scale to enable this controlled comparison across configurations; 7B+ validation is left for future work.

\section{\acoer Hyperparameters}
\label{sec:appendix_acoer}

\begin{table}[h]
\centering
\resizebox{0.98\columnwidth}{!}{%
\begin{tabular}{llp{4.5cm}}
\toprule
\textbf{Parameter} & \textbf{Value} & \textbf{Description} \\
\midrule
$\alpha_0$ & 0.02 & Initial efficiency weight \\
$\alpha_{\min}$ & 0.01 & Floor \\
$\alpha_{\max}$ & 0.50 & Ceiling \\
$\alpha_{\text{up}}$ & 1.02 & Multiplicative increase \\
$\alpha_{\text{down}}$ & 0.95 & Multiplicative decrease \\
EMA span & 50 & Smoothing window \\
Check window & 100 & Trend comparison lookback \\
$\delta$ (acc drop) & 0.02 & Threshold for $\alpha$ decrease \\
$\gamma$ (budget ratio) & 0.85 & $B_t = \gamma \cdot \text{EMA}_t$ \\
$B_{\min}$ & 512 & Minimum budget (tokens) \\
Warmup & 200 & Steps before adaptation \\
$k$ (log scale) & 5.0 & Diminishing returns parameter \\
\bottomrule
\end{tabular}
}
\caption{\acoer hyperparameters. All values are defaults used without tuning.}
\label{tab:acoer_hypers}
\end{table}

\begin{table*}[t]
\centering
\small
\begin{tabular}{lp{10cm}c}
\toprule
\textbf{Method} & \textbf{Reward formula} & \textbf{$\beta$ equiv.} \\
\midrule
GRPO (acc only) &
$r = c$ & 0 \\[3pt]

$\beta{=}0$ (linear) &
$r = c + c \cdot 0.3 \cdot (1 - \ell/L)$ & 0 \\[3pt]

\acoer &
$r = c + c \cdot \alpha_t \cdot \frac{\log(1 + k(1 - \ell/B_t))}{\log(1+k)}$ & 0 \\[3pt]

\midrule
$\beta{=}0.01$ &
$r = c + c \cdot 0.3 \cdot (1-\ell/L) - (1-c) \cdot 0.01 \cdot (1-\ell/L)$ & 0.01 \\[3pt]

$\beta{=}0.05$ &
$r = c + c \cdot 0.3 \cdot (1-\ell/L) - (1-c) \cdot 0.05 \cdot (1-\ell/L)$ & 0.05 \\[3pt]

$\beta{=}0.10$ &
$r = c + c \cdot 0.3 \cdot (1-\ell/L) - (1-c) \cdot 0.1 \cdot (1-\ell/L)$ & 0.1 \\[3pt]

GRPO+LP &
$r = c - 0.3 \cdot \ell/L$ \quad (uniform penalty) & 0.3 \\[3pt]

GRPO-LEAD &
$r = e^{-\ell/L}$ (correct), $-1$ (incorrect) & fixed \\[3pt]

ReCut &
$r = \pm 1/|Y|$ \quad (unbounded, bidirectional) & $\infty$ \\[3pt]

\bottomrule
\end{tabular}
\caption{Exact reward formulas for compared methods. $c \in \{0, 1\}$ denotes correctness, $\ell$ denotes reasoning tokens, $L$ is the maximum token limit, and $|Y|$ represents total output tokens. Note that the constant format reward ($r_{\text{format}}$) is added to all formulas.}
\label{tab:reward_formulas}
\end{table*}

\section{Reward Formula Reference}
\label{sec:appendix_rewards}

Table~\ref{tab:reward_formulas} provides exact formulas for all compared methods in our diagnostic framework.

\section{Evaluation Protocol}
\label{sec:appendix_eval}

All evaluations use a standardized pipeline:
\begin{itemize}
    \item \textbf{Prompt}: Single-turn chat with ``Solve the following math problem.'' Model chat template with \texttt{enable\_thinking=True}.
    \item \textbf{Generation}: vLLM \citep{vllm} (bfloat16, greedy decoding, temperature 0.0, max 16,384 tokens). LoRA adapters loaded without merging.
    \item \textbf{Verification}: Answer extraction via regex from \texttt{\textbackslash boxed\{\}}, verified by the \texttt{math\_verify} library \citep{mathverify}.
    \item \textbf{Metrics}: Accuracy, mean/median thinking tokens, accuracy-per-1K-tokens, correct/incorrect token decomposition.
    \item \textbf{Reporting}: Unless otherwise noted, accuracy is reported at the stated checkpoint for a single training run and token counts are means over the evaluation set; collapsed methods report both the final checkpoint and the pre-collapse peak.
\end{itemize}

\section{Training Dynamics}
\label{sec:appendix_training}

\paragraph{Reward variance dynamics.}
Stable methods ($\beta{=}0$) maintain reward variance (\texttt{frac\_reward\_zero\_std} $< 0.35$) throughout training.
Collapsing methods exceed 0.5 approximately 200 steps before accuracy degradation, providing an early warning signal.

\section{Group Composition Analysis}
\label{sec:appendix_group_stats}

With a group size of $G=16$, the probability distribution of group compositions shifts significantly compared to smaller groups. We analyze two scenarios based on the model's accuracy $p$:

\paragraph{Dominance of Mixed Groups ($p=0.7$).} 
On problems of moderate difficulty where the model accuracy $p \approx 0.7$ (incorrect probability $q=0.3$), the probability of generating a mixed group containing at least one correct and one incorrect response is:
\begin{equation}
\begin{aligned}
P(\text{mixed}) &= 1 - p^{16} - q^{16} \\
&\approx 1 - 0.0033 - 0.0000 \\
&\approx 99.7\%
\end{aligned}
\end{equation}
In these groups, the reward standard deviation $\sigma_r$ is determined by the binomial variance $\sqrt{p(1-p)}$. For $p=0.7$, $\sigma_r \approx 0.458$, and for $p=0.8$, $\sigma_r = 0.400$. This confirms that in nearly all training steps, the denominator in GRPO acts as a stable constant $C \approx 0.4$, exerting a \emph{constant downward pressure} on incorrect lengths as described in Section~\ref{sec:mechanism}.

\paragraph{Homogeneous Incorrect Groups on Hard Tasks ($p=0.2$).} 
On challenging problems (e.g., MATH Level 5) where accuracy may drop to $p \approx 0.2$ ($q=0.8$), the probability that all 16 completions are incorrect is:
\begin{equation}
P(\text{all incorrect}) = q^{16} = 0.8^{16} \approx 0.028 \quad (2.8\%)
\end{equation}
While infrequent, these batches occur multiple times within a typical 1,200-step training run. When they occur, $\sigma_r \to 0$, triggering the explosive advantage divergence (Proposition 1) that accelerates the structural collapse.

\section{Scope of Comparison}
\label{sec:appendix_scope}

Our 11 configurations include four published baselines (GRPO-acc, GRPO+LP, GRPO-LEAD, ReCut) and seven systematic ablations designed to isolate the $\beta$ parameter and test adaptive mechanisms.
The $\beta$ sensitivity analysis ($\beta \in \{0, 0.01, 0.05, 0.10\}$) varies only the $\beta$ parameter with fixed $\alpha{=}0.3$, providing the cleanest causal test.

\end{document}